\newtheorem*{theorem}{Theorem}
\newtheorem*{proposition}{Proposition}
\newtheorem{lemma}{Lemma}
\def\cast{{
   \mathord{
      \hbox to 0em{
         \ooalign{
	   \smash{\hbox{$\ast$}}\crcr
	   \smash{\hskip-1pt\Large\hbox{$\circ$}} }
	 \hidewidth}
      \phantom{\bigcirc}
} }}
\newcommand{\bds}{\begin {itemize}}
\newcommand{\eds}{\end {itemize}}
\newcommand{\bdf}{\begin{definition}}
\newcommand{\blm}{\begin{lemma}}
\newcommand{\edf}{\end{definition}}
\newcommand{\elm}{\end{lemma}}
\newcommand{\bthm}{\begin{theorem}}
\newcommand{\ethm}{\end{theorem}}
\newcommand{\bprp}{\begin{prop}}
\newcommand{\eprp}{\end{prop}}
\newcommand{\bcl}{\begin{claim}}
\newcommand{\ecl}{\end{claim}}
\newcommand{\bcr}{\begin{coro}}
\newcommand{\ecr}{\end{coro}}
\newcommand{\bquest}{\begin{question}}
\newcommand{\equest}{\end{question}}
\newcommand{\larrow}{{\larrow}}
\newcommand{\argmin}{\ensuremath{\mathrm{arg}\min}}
\newcommand{\argmax}{\ensuremath{\mathrm{arg}\max}}
\newcommand{\cS}{{\ensuremath{\mathcal{S}}}}
\newcommand{\cV}{{\ensuremath{\mathcal{V}}}}
\newcommand{\cX}{{\ensuremath{\mathcal{X}}}}
\newcommand{\vc}{{\ensuremath{{\mathbf{c}}}}}
\newcommand{\ve}{{\ensuremath{{\mathbf{e}}}}}
\newcommand{\mA}{{\ensuremath{\mathbf{A}}}}
\newcommand{\mB}{{\ensuremath{\mathbf{B}}}}
\newcommand{\mD}{{\ensuremath{\mathbf{D}}}}
\newcommand{\mL}{{\ensuremath{\mathbf{L}}}}
\newcommand{\mM}{{\ensuremath{\mathbf{M}}}}
\def\IC{\mathbb C}
\def\IN{\mathbb N}
\def\IZ{\mathbb Z}
\def\IR{\mathbb R}
\def\shat{^{\mathchoice{}{}%
 {\,\,\smash{\hbox{\lower4pt\hbox{$\widehat{\null}$}}}}%
 {\,\smash{\hbox{\lower3pt\hbox{$\hat{\null}$}}}}}}
\def\bSigma{{
      \ooalign{
      \smash{\hskip.4pt\raise.4pt\hbox{$\Sigma$}}\vphantom{}\crcr
      \smash{\hskip.7pt\raise.6pt\hbox{$\Sigma$}}\vphantom{}\crcr
      \smash{\hbox{$\Sigma$}}\vphantom{$\Sigma$}}
      \vphantom{\hbox{$\Sigma$}}
      }}
\def\bTheta{{
      \ooalign{
      \smash{\hskip.5pt\raise.5pt\hbox{$\Theta$}}\vphantom{}\crcr
      \smash{\hskip.0pt\raise.1pt\hbox{$\Theta$}}\vphantom{}\crcr
      \smash{\hbox{$\Theta$}}\vphantom{$\Theta$}}
      \vphantom{\hbox{$\Theta$}}
      }}
\def\bDelta{{
      \ooalign{
      \smash{\hskip.4pt\raise.4pt\hbox{$\Delta$}}\vphantom{}\crcr
      \smash{\hskip.7pt\raise.6pt\hbox{$\Delta$}}\vphantom{}\crcr
      \smash{\hbox{$\Delta$}}\vphantom{$\Delta$}}
      \vphantom{\hbox{$\Delta$}}
      }}
\def\bLambda{{
      \ooalign{
      \smash{\hskip.5pt\raise.5pt\hbox{$\Lambda$}}\vphantom{}\crcr
      \smash{\hskip.0pt\raise.1pt\hbox{$\Lambda$}}\vphantom{}\crcr
      \smash{\hbox{$\Lambda$}}\vphantom{$\Lambda$}}
      \vphantom{\hbox{$\Lambda$}}
      }}
\def\bordermatrix#1{\begingroup \m@th
  \@tempdima 8.75\p@
  \setbox\z@\vbox{%
    \def\cr{\crcr\noalign{\kern2\p@\global\let\cr\endline}}%
    \ialign{$##$\hfil\kern2\p@\kern\@tempdima&\thinspace\hfil$##$\hfil
      &&\quad\hfil$##$\hfil\crcr
      \omit\strut\hfil\crcr\noalign{\kern-\baselineskip}%
      #1\crcr\omit\strut\cr}}%
  \setbox\tw@\vbox{\unvcopy\z@\global\setbox\@ne\lastbox}%
  \setbox\tw@\hbox{\unhbox\@ne\unskip\global\setbox\@ne\lastbox}%
  \setbox\tw@\hbox{$\kern\wd\@ne\kern-\@tempdima\left[\kern-\wd\@ne
    \global\setbox\@ne\vbox{\box\@ne\kern2\p@}%
    \vcenter{\kern-\ht\@ne\unvbox\z@\kern-\baselineskip}\,\right]$}%
  \null\;\vbox{\kern\ht\@ne\box\tw@}\endgroup}
\def\argmin{\mathop{\operator@font arg\,min}}
\def\argmax{\mathop{\operator@font arg\,max}}
\newcommand{\bea}{\begin{array}}
\newcommand{\ena}{\end{array}}
\newcommand{\beq}{\begin{equation}}
\newcommand{\enq}{\end{equation}}
\newcommand{\beqa}{\begin{eqnarray}}
\newcommand{\enqa}{\end{eqnarray}}
\newcommand{\beqan}{\begin{eqnarray*}}
\newcommand{\enqan}{\end{eqnarray*}}
\newcommand{\AL}{\begin{enumerate}}
\newcommand{\ALE}{\end{enumerate}}
\def\addots{\mathinner{
    \mkern1mu\raise0pt\vbox{\kern7pt\hbox{.}}
    \mkern2mu\raise4pt\hbox{.}
    \mkern2mu\raise7pt\hbox{.}
    \mkern1mu}}
\def\sddots{\mathinner{
    \mkern.8mu\raise7pt\hbox{.}
    \mkern.8mu\raise4pt\hbox{.}
    \mkern.8mu\raise0pt\vbox{\kern7pt\hbox{.}}
    \mkern1mu}}
\def\saddots{\mathinner{
    \mkern.2mu\raise0pt\vbox{\kern7pt\hbox{.}}
    \mkern.2mu\raise4pt\hbox{.}
    \mkern.2mu\raise7pt\hbox{.}
    \mkern1mu}}
\def\sqplus{\mathbin{
	{\ooalign{\hfil\raise.3ex\hbox{\scriptsize
	+}\hfil\crcr\mathhexbox274\crcr\mathhexbox275}}
	}} 
\def\sqminus{\mathbin{
	{\ooalign{\hfil\raise.3ex\hbox{\scriptsize
	--}\hfil\crcr\mathhexbox274\crcr\mathhexbox275}}
	}}
\def\IC{{
   \mathord{
      \hbox to 0em{
	 \hskip-4pt
         \ooalign{
	   \smash{\hskip1.9pt\raise2.6pt\hbox{$\scriptscriptstyle |$}}\crcr
	   \smash{\hbox{\rm\sf C}} }
	 \hidewidth}
      \phantom{\hbox{\rm\sf C}}
} }}
\def\IN{
    {\ooalign{
   \smash{\hskip2.2pt\raise1.5pt\hbox{$\scriptscriptstyle |$}}\vphantom{}\crcr
   \hbox{\sf N}
	}}
	} 
\def\IZ{
    {\ooalign{
   \smash{\hskip1.9pt\raise0pt\hbox{$\sf Z$}}\vphantom{}\crcr
   \hbox{\sf Z}
	}}
	} 
\def\IR{
    {\ooalign{
   \smash{\hskip2.2pt\raise1.5pt\hbox{$\scriptscriptstyle |$}}\vphantom{}\crcr
   \smash{\hskip2.2pt\raise3.3pt\hbox{$\scriptscriptstyle |$}}\vphantom{}\crcr
   \hbox{\sf R}
	}}
	} 
\DeclareMathAlphabet{\mathcmb}{OT1}{cmr}{b}{n}
\def\bSigma{\ensuremath{\mathcmb{\Sigma}}}
\def\bLambda{\ensuremath{\mathcmb{\Lambda}}}
\def\bTheta{\ensuremath{\mathcmb{\Theta}}}
\newcommand{\SI}{\begin{indlist}}
\newcommand{\EI}{\end{indlist}}
\newcommand{\DL}{\begin{dashlist}}
\newcommand{\DLE}{\end{dashlist}}
\def\setboxz@h{\setbox\z@\hbox}
\def\wdz@{\wd\z@}
\def\boxz@{\box\z@}
\def\underset#1#2{\binrel@{#2}%
  \binrel@@{\mathop{\kern\z@#2}\limits_{#1}}}
\def\binrel@#1{\begingroup
  \setboxz@h{\thinmuskip0mu
    \medmuskip\m@ne mu\thickmuskip\@ne mu
    \setbox\tw@\hbox{$#1\m@th$}\kern-\wd\tw@
    ${}#1{}\m@th$}%
  \edef\@tempa{\endgroup\let\noexpand\binrel@@
    \ifdim\wdz@<\z@ \mathbin
    \else\ifdim\wdz@>\z@ \mathrel
    \else \relax\fi\fi}%
  \@tempa
}
\let\binrel@@\relax%
\begin{document}
\title{
Clustering with Simplicial Complexes
}
\author{
\IEEEauthorblockN{Thummaluru Siddartha Reddy$^\star$, Sundeep Prabhakar Chepuri$^\star$, and Pierre Borgnat$^\dagger$\\
\emph{$^\star$Indian Institute of Science, Bengaluru, India}\\
\emph{$^\dagger$ENS de Lyon, CNRS, Lab. Physique, Lyon, France}
}

}
\maketitle
\begin{abstract}
In this work, we propose a new clustering algorithm to group nodes in networks based on second-order simplices (aka filled triangles) to leverage higher-order network interactions. We define a simplicial conductance function, which on minimizing, yields an optimal partition with a higher density of filled triangles within the set while the density of filled triangles is smaller across the sets. To this end, we propose a simplicial adjacency operator that captures the relation between the nodes through  second-order simplices. This allows us to extend the well-known Cheeger inequality to cluster a simplicial complex. Then, leveraging the Cheeger inequality, we propose the simplicial spectral clustering algorithm. We report results from numerical experiments on synthetic and real-world network data to demonstrate the efficacy of the proposed approach. 
\end{abstract}

\begin{IEEEkeywords}
Cheeger inequality, clustering, higher-order cuts, simplicial complexes, triangle conductance.
\end{IEEEkeywords}
\section{Introduction} \label{sec:introduction}

Networks model complex interactions (as edges) between entities (as nodes). Networks often have community structures, and determining these communities is a topic of significant interest in network science
\cite{Fortunato2010,benson2016higher}. Clustering algorithms detect communities by partitioning the nodes in a network into sets with high edge density within a set while maintaining a low edge density between nodes of different sets. Such partitioning is promoted by cut criteria, such as modularity~\cite{newman2006modularity} or edge conductance~\cite{lu2018community,shi2000normalized}.
Spectral clustering~\cite{ng2001spectral,von2007tutorial} is a well-known heuristic obtained by relaxing the edge conductance cut criterion and relating it to the second smallest eigenvalue of the graph Laplacian through the so-called Cheeger inequality. However, these methods only capture the pairwise relations between nodes. In many real-world networks~\cite{newman2001structure,granovetter1973strength}, 
 \emph{supra-pairwise} relations, i.e., beyond pairwise relationships, e.g., triadic or more, are common~\cite{BATTISTON20201}, and preserving such structures while clustering is paramount. 

Higher-order spectral clustering algorithms account for higher-order interactions, wherein the cut of triangles (capturing triadic interactions) is minimized for network partitioning~\cite{benson2016higher,benson2015tensor,tsourakakis2017scalable}. These methods can be broadly classified into two classes. The first class encodes higher-order structures in an affinity tensor (a higher-order generalization of a similarity matrix), then apply spectral clustering on it \cite{benson2015tensor}. The second class directly constructs the so-called motif Laplacian matrix by counting higher-order structures in the network, then apply spectral clustering on it~\cite{benson2016higher,tsourakakis2017scalable}.
\begin{figure}%
    \centering
    {\includegraphics[width=\columnwidth]{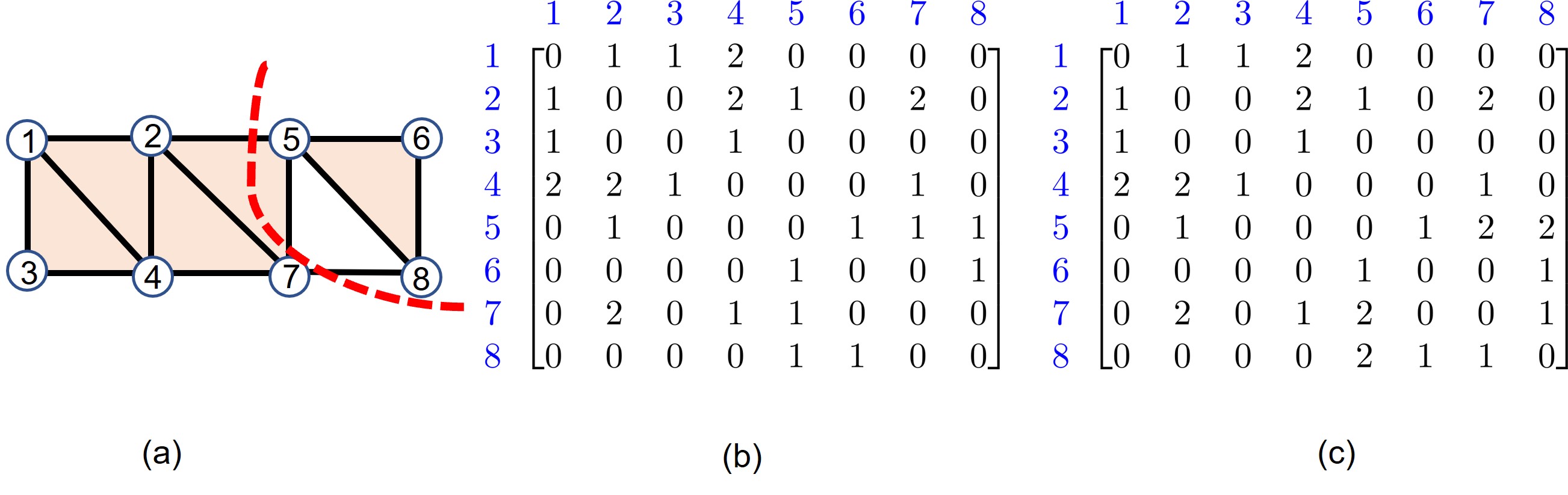} }%
    \caption{(a) Simplicial complex with a 2-simplex cut. (b) Simplicial adjacency matrix. (c) Triangle motif adjacency matrix.}%
    \label{fig:example}
\end{figure}
An affinity tensor or motif adjacency matrix assumes that all pairwise interactions lead to a higher-order (e.g., triadic) relation. That is, it assumes all triangles are filled, while we may have hollow triangles in some cases.
As an example, the network in Fig.~\ref{fig:example}(a) has only pairwise relations between the nodes $\{5,7,8\}$, through $\{5,7\}, \{7,8\}, \{8,5\}$, and the triangle $\{5,7,8\}$ is hollow. This, in other words, means that the network does not contain a triadic interaction between nodes $\{5,7,8\}$. A triangle motif, on the contrary, counts $\{5,7,8\}$ as a triangle and assumes it as equally important as other filled triangles, e.g., $\{1,3,4\}$, $\{2,4,7\}$ and $\{5,6,7\}$ (see its adjacency matrix in Fig.~\ref{fig:example}(b)). 
To distinguish filled higher-order structures, we model networks as simplicial complexes (see its adjacency matrix in Fig.~\ref{fig:example}(c)). Simplicial complexes~\cite{benson2018simplicial} are mathematical objects that model higher-order interactions in networks and are composed of simplices of different orders, such as nodes (0-simplices), edges (1-simplices), and filled triangles (2-simplices) as basic building blocks. 

This work focuses on clustering a network modeled as a simplicial complex using $2$-simplices.
Specifically, we develop a simplicial conductance function, which, when minimized, outputs a partition with a higher density of $2$-simplices within the set and a lower density of 2-simplices across the sets. To circumvent the combinatorial optimization problem when minimizing the simplicial conductance, we propose a new operator called \emph{simplicial adjacency} to encode the similarity between the nodes through the $2$-simplices. The simplicial adjacency matrix can be computed in  closed form as a boundary of a boundary matrix associated with the simplicial complex. We then relate the spectrum of the simplicial Laplacian matrix (derived from the simplicial adjacency matrix) to simplicial conductance through the Cheeger inequality. This allows us to propose a simplicial spectral clustering algorithm. 

\section{Simplicial adjacency matrix}

A simplex is a subset of the vertex set $\cV = \{v_{1},\ldots v_{N}\}$ of a graph with a $k$-simplex (or simplex of order $k$), denoted as $\sigma_{i}^{(k)}= \{v_{i_1},v_{i_2},\ldots v_{i_{k+1}}\}$, being a subset of $\cV$ of cardinality $k+1$. For instance, $\sigma^{(2)}_{i} = \{v_{i_1}, v_{i_2}, v_{i_3}\} \subseteq \cV$ denotes a 2-simplex. An undirected simplicial complex $\cX$ is a finite collection of simplices such as nodes ($0$-simplices), an edge (or $1$-simplices), and a filled-triangle (or $2$-simplices). The order of a simplicial complex is the highest order of the simplices it contains. The boundary matrix $\mB_{k} \in \mathbb{R}^{N_{k-1} \times N_{k}}$ encodes the relation between $(k-1)$-simplices and $k$-simplices, i.e., it encodes which $(k-1)$-simplex is adjacent to which $k$-simplices.  For an undirected simplicial complex $\cX$, the boundary operator $\mB_{k}$ has entries as
$$
[\mB_{k}]_{ij}=\begin{cases}
			1, & \text{if $\sigma^{(k-1)}_{i} \subset \sigma^{(k)}_{j}$},\\
            0, & \text{otherwise.}
		 \end{cases}
$$

Since we do not account for any orientation in $\mB_{k}$, the boundary of a boundary map is not equal to zero, i.e., $\mB_{k}\mB_{k+1} \neq \boldsymbol{0}$. \color{black}

We define the \emph{simplicial adjacency matrix} $\mA_{0,2}$ that encodes the relations between $0$-simplices through $2$-simplices with  its $(i,j)$ entry $[\mA_{0,2}]_{ij}$ equal to the number of $2$-simplices the nodes $v_i$ and $v_j$ appear in, i.e.,
\begin{equation}
    [\mA_{0,2}]_{ij} = \sum_{p=1}^{N_2} \mathbb{I}(\{v_i,v_j\} \in \sigma^{(2)}_p)
    \label{eq:Adjacencymatrix}
\end{equation}
\color{black}
for $i \neq j$ and $[\mA_{0,2}]_{ii} = 0$. Here, $\mathbb{I}(\cdot)$ is the indicator function that returns 1 when its argument is true.  
Let us define the diagonal degree matrix $\mD_{0,2} \in \mathbb{R}^{N_{0} \times N_{0}}$ matrix with entries $[\mD_{0,2}]_{ii} = \sum_{j} [\mA_{0,2}]_{ij}$ and the simplicial Laplacian matrix as $\mL_{0,2} =  \mD_{0,2} - \mA_{0,2}$.
 The normalized simplicial Laplacian is given by $\tilde{\mL}_{0,2} = \mD_{0,2}^{-1/2} \mL_{0,2} \mD_{0,2}^{-1/2}$. The simplicial adjacency matrix in \eqref{eq:Adjacencymatrix} can also be obtained using the boundary operators as follows.

\begin{proposition} The simplicial adjacency matrix $[\mA_{0,2}]_{ij}$ can be computed from the boundary matrices $\mB_1$ and $\mB_2$ as 
\begin{equation}
    [\mA_{0,2}]_{i,j} =  \frac{\left[{\mB}_{12}{\mB}_{12}^{T}\right]_{ij}}{4}.
    \label{eq:simplicialadjacency}
\end{equation}
where ${\mB}_{12}= \mB_{1}\mB_{2} \in \mathbb{R}^{N_{0} \times N_{2}}$.
\end{proposition}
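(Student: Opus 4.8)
The plan is to compute the $(i,j)$ entry of $\mB_{12}\mB_{12}^{T}$ directly and interpret each factor combinatorially in terms of incidences. First I would expand the inner product $\mB_{12}=\mB_1\mB_2$ to obtain
\begin{equation}
[\mB_{12}]_{ip} = \sum_{e=1}^{N_1} [\mB_{1}]_{ie}\,[\mB_{2}]_{ep},
\end{equation}
where the index $e$ runs over the $1$-simplices (edges). By the definition of the unsigned boundary operators, $[\mB_{1}]_{ie}=1$ exactly when the node $v_i$ is a face of the edge $e$ (i.e.\ $v_i$ is an endpoint of $e$), and $[\mB_{2}]_{ep}=1$ exactly when the edge $e$ is a face of the $2$-simplex $\sigma^{(2)}_p$. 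Hence $[\mB_{12}]_{ip}$ counts the number of edges $e$ that simultaneously contain $v_i$ and are faces of $\sigma^{(2)}_p$.

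The key combinatorial observation is that every vertex of a filled triangle lies in exactly two of the triangle's three edges. Thus if $v_i \in \sigma^{(2)}_p$ there are precisely two qualifying edges, while if $v_i \notin \sigma^{(2)}_p$ there are none, giving $[\mB_{12}]_{ip}=2\,\mathbb{I}(v_i \in \sigma^{(2)}_p)$. Substituting this into the outer product yields
\begin{equation}
[\mB_{12}\mB_{12}^{T}]_{ij} = \sum_{p=1}^{N_2} [\mB_{12}]_{ip}\,[\mB_{12}]_{jp} = 4\sum_{p=1}^{N_2} \mathbb{I}(v_i \in \sigma^{(2)}_p)\,\mathbb{I}(v_j \in \sigma^{(2)}_p).
\end{equation}
For $i\neq j$ the product of indicators equals $\mathbb{I}(\{v_i,v_j\}\in\sigma^{(2)}_p)$, which is exactly the summand appearing in the definition \eqref{eq:Adjacencymatrix} of $[\mA_{0,2}]_{ij}$. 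Dividing both sides by $4$ then gives \eqref{eq:simplicialadjacency}.

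I expect the main obstacle to be pinning down the factor of $4=2^2$ rigorously rather than any heavy computation: the whole statement hinges on correctly arguing that each of $v_i$ and $v_j$ contributes a factor of $2$ through its incidence to two edges of the shared triangle, a fact that relies specifically on the \emph{unsigned} definition of the boundary maps (so that no cancellations occur, consistent with the earlier remark that $\mB_1\mB_2\neq\boldsymbol{0}$ here). I would also flag that the restriction $i\neq j$ is essential, since on the diagonal the two indicators coincide and the counting interpretation changes; this is consistent with the separate convention $[\mA_{0,2}]_{ii}=0$.
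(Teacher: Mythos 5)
Your proof is correct and follows essentially the same route as the paper: compute $[\mB_{12}]_{ip}=2\,\mathbb{I}(v_i\in\sigma^{(2)}_p)$ and then take the outer product to obtain the factor of $4$. If anything, your combinatorial justification of the factor $2$ (each vertex of a filled triangle lies in exactly two of its three edges) is more explicit than the paper's appeal to ``undirectedness,'' and your remark about the $i\neq j$ restriction is a sensible clarification.
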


Since each row of $\mB_{1}$ corresponds to a $0$-simplex and has nonzero entries where $1$-simplices are incident on the $0$-simplex and each column of $\mB_{2}$ corresponds to a $2$-simplex and has nonzero entries where $2$-simplices are incident on $1$-simplices. Therefore, the inner product between $i$th row and $j$th column gives a nonzero value if a $0$-simplex is incident upon a $2$-simplex. Hence ${\mB}_{12}$ has nonzero entries only if a $0$-simplex is incident on a $2$-simplex. In other words, $[{\mB}_{12}]_{ij} = 2\mathbb{I}(v_i \in \sigma^{(2)}_j)$. Here, the factor $2$ appears as we consider undirected simplices. Now,  $[\mA_{0,2}]_{i,j}$ is obtained by taking the inner product between the $i$th row and $j$th column of ${\mB}_{12}$ that counts the total number of $2$-simplices the edge between nodes $v_i$ and $v_j$ appear in.
The factor of $4$ is due to undirected simplices.\\

\section{Clustering using $2$-simplices}

This section discusses the proposed algorithm for clustering $0$-simplices (i.e., nodes) in a simplicial complex $\cX$ by leveraging the higher-order relation through $2$-simplices. 

\subsection{Simplicial conductance} 
The $2$-way partitioning of the undirected simplicial complex $\cX$ based on $2$-simplices corresponds to finding a nodal partition such that $0$-simplices within a set have a high density of $2$-simplices, and $0$-simplices across sets have a low density of $2$-simplices. To obtain this, we define the following cut measure:  
\begin{equation}
    {\phi}_{0,2}(\cS)  = \frac{\mathrm{cut}_{0,2}(\cS,\bar{\cS})}{\mathrm{min}\left(\mathrm{vol}_{0,2}(\cS),\mathrm{vol}_{0,2}(\bar{\cS})\right)},
\label{eq:simplicial_Conductance}
\end{equation}
where we call ${\phi}_{0,2}(\cS)$ the \emph{simplicial conductance} induced by 2-simplices. In \eqref{eq:simplicial_Conductance}, $\mathrm{cut}_{0,2}(\cS,\bar{\cS})$ measures the number of $2$-simplex cut of $\cS$, i.e., the number of $2$-simplices that have one vertex in $\cS$ and other vertices in $\bar{\cS}$; $\mathrm{vol}_{0,2}(\cS)$ measures the total number of $2$-simplices having vertices in $\cS$. 
For the optimal partitioning of the simplicial complex, we minimize  $\phi_{0,2}(\cS)$ as 
\begin{equation}
 \phi^{\star}_{0,2}(\cS) =    \underset{\cS\subset \cV}{\mathrm{minimize}} \, \, \phi_{0,2}(\cS).
    \label{eq:simplicial optimization}
\end{equation}
Solving \eqref{eq:simplicial optimization} is NP-hard as it involves evaluating all the possible cuts.
To circumvent the difficulty, we develop an algorithm similar to spectral clustering, namely, simplicial spectral clustering.

\subsection{Cheeger inequality with simplicial adjacency}
We now discuss the relationship between  simplicial conductance to the second smallest eigenvalue of the simplicial Laplacian matrix $\mL_{0,2}$ in the following theorem.
\begin{theorem}
For an undirected simplicial complex $\cX$ having the normalized simplicial Laplacian matrix $\tilde{\mL}_{0,2}$ with $\lambda_2$ being its second smallest eigenvalue, we have 
$$\frac{\lambda_{2}}{2}\leq \phi^{\star}_{0,2}(\cS)\le \sqrt{2\lambda_{2}}.$$
\end{theorem}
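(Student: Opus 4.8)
The plan is to recognize that $\mL_{0,2} = \mD_{0,2} - \mA_{0,2}$ is precisely the combinatorial Laplacian of a \emph{weighted} undirected graph $G_{0,2}$ on the vertex set $\cV$ whose edge weights are the nonnegative integers $w_{ij} = [\mA_{0,2}]_{ij}$ counting the $2$-simplices containing both $v_i$ and $v_j$. Since $\mA_{0,2}$ is symmetric and entrywise nonnegative, $\tilde{\mL}_{0,2}$ is symmetric positive semidefinite with smallest eigenvalue $0$ attained at $\mD_{0,2}^{1/2}\mathbf{1}$, so $\lambda_2$ admits the usual Courant--Fischer characterization. Once this identification is made, the statement is exactly the classical two-sided Cheeger inequality for weighted graphs, and the proof reduces to (i) showing that $\phi_{0,2}$ coincides with the edge conductance of $G_{0,2}$, and (ii) transferring the standard bounds.

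The first step, and the one I expect to be the real crux, is the combinatorial bookkeeping that matches the simplicial cut and volume to their weighted-graph counterparts, since the factors introduced by working with \emph{undirected} simplices (the same factors that produced the factor $4$ in the Proposition) must be tracked carefully. Each $2$-simplex straddling the partition contributes exactly two crossing edges to $\sum_{i\in\cS,\,j\in\bar{\cS}} w_{ij}$, so the weighted cut equals $2\,\mathrm{cut}_{0,2}(\cS,\bar{\cS})$; likewise $d_i=\sum_j w_{ij}=2\times(\text{number of }2\text{-simplices on }v_i)$, so $\mathrm{vol}(\cS)=\sum_{i\in\cS}d_i=2\,\mathrm{vol}_{0,2}(\cS)$. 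The common factor of $2$ cancels in the ratio, giving $\phi_{0,2}(\cS)=\mathrm{cut}(\cS)/\min(\mathrm{vol}(\cS),\mathrm{vol}(\bar{\cS}))$ exactly. This cancellation is precisely what guarantees the stated constants $\tfrac{1}{2}$ and $\sqrt{2}$ rather than rescaled versions, so it is worth verifying in detail.

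For the easy lower bound $\lambda_2/2 \le \phi^{\star}_{0,2}$, I would insert a test vector $\mathbf{x}$ built from the optimal set $\cS$ (taking $x_i$ constant of one sign on $\cS$ and of the other sign on $\bar{\cS}$, scaled so that $\mathbf{x}$ is $\mD_{0,2}$-orthogonal to $\mathbf{1}$) into the Rayleigh quotient $\mathbf{x}^{T}\mL_{0,2}\mathbf{x}/\mathbf{x}^{T}\mD_{0,2}\mathbf{x}$, show this quotient is at most $2\phi_{0,2}(\cS)$, and invoke $\lambda_2 \le \mathbf{x}^{T}\tilde{\mL}_{0,2}\mathbf{x}$ over the admissible subspace. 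For the harder upper bound $\phi^{\star}_{0,2}\le\sqrt{2\lambda_2}$, I would run the standard Cheeger rounding argument: set $\mathbf{x}=\mD_{0,2}^{-1/2}\mathbf{v}_2$ for the eigenvector $\mathbf{v}_2$ of $\lambda_2$, sort the vertices by the entries of $\mathbf{x}$, and analyze the family of sweep (threshold) cuts; a Cauchy--Schwarz estimate bounding the total weighted cut of the sweep sets by $\sqrt{\lambda_2}$ times their volume then shows that some sweep set $\cS$ satisfies $\phi_{0,2}(\cS)\le\sqrt{2\lambda_2}$, whence minimality of $\phi^{\star}_{0,2}$ finishes. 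Modulo the factor tracking in step~(i), both directions are verbatim the classical proof, so no new analytic idea is needed beyond the correct identification of $G_{0,2}$.
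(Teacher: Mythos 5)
Your proposal is correct and follows essentially the same route as the paper: both reduce the problem to the classical weighted-graph Cheeger inequality by identifying $\mL_{0,2}$ with the Laplacian of the weighted graph induced by $\mA_{0,2}$ and checking, via per-simplex bookkeeping, that the simplicial cut and volume agree with their weighted-graph counterparts up to a common factor of $2$ that cancels in the conductance ratio (the paper carries out this same accounting with the counts $z_i(\cS)$ and the per-simplex quadratic forms $\vc^T\mM(\sigma^{(2)}_i)\vc$, then likewise defers to the standard test-vector and sweep-cut arguments).
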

\begin{proof} We prove the above Cheeger inequality for an undirected simplicial complex by relating the simplicial conductance $\phi_{0,2}(\cS)$ to the quadratic form of simplicial Laplacian $\mL_{0,2}$. 

Let us define $z_{i}(\cS)$ as number of $2$-simplices with exactly $i$ vertices in $\cS$.  
We can express the $\mathrm{vol}_{0,2}(\cS)$ that measures the total number of $2$-simplices having vertices in $\cS$ as: $$\mathrm{vol}_{0,2}(\cS) = 3z_{3}(\cS)+2z_{2}(\cS)+z_{1}(\cS).$$ 
The factors $3$ and $2$ are due to the undirectedness of the simplicial complex.  Similarly, the cut function  $\mathrm{cut}_{0,2}(\cS,\bar{\cS})$ can be expressed 
as $\mathrm{cut}_{0,2}(\cS,\bar{\cS}) = z_{2}(\cS)+z_1{(\cS)}$. Therefore the simplicial conductance is
\begin{equation}
    \phi_{0,2}(\cS) = \frac{z_{2}(\cS) + z_{1}(\cS)}{\mathrm{min}(\mathrm{vol}_{0,2}(\cS),\mathrm{vol}_{0,2}(\bar{\cS}))}.
\label{eq:simplicial clustering}
\end{equation}

We next express the numerator in \eqref{eq:simplicial clustering} using the quadratic form of the simplicial Laplacian matrix. Let us first define the $3 \times 3$ symmetric matrix $\mM(\sigma_i)$ with the following entries:
$$
[\mM(\sigma^{(2)}_i)]_{mm}=\begin{cases}
			2, & \text{if $v_m \in \sigma^{(2)}_{i}$},\\
            0, & \text{otherwise}
		 \end{cases}
$$
and
$$
[\mM(\sigma_i^{(2)})]_{mn}=\begin{cases}
			-1, & \text{if $\{v_m,v_n\} \in \sigma^{(2)}_i$},\\
            0, & \text{otherwise.}
		 \end{cases}
$$
Let us also define the vector $\ve_\cS \in \{0,1\}^{N_0}$ with entries $[\ve_\cS]_i = \mathbb{I}(v_i \in \cS)$. For a 2-simplex, say $\sigma^{(2)}_i = \{v_{i_1},v_{i_2}, v_{i_3}\}$, we define the $3 \times 1$ vector $\vc(\sigma^{(2)}_i) = [\mathbb{I}(v_{i_1} \in \cS), \mathbb{I}(v_{i_2} \in \cS), \mathbb{I}(v_{i_3} \in \cS)]^T.$ Then the quadratic form can be expressed as

\begin{align}
   \frac{1}{2} \ve_\cS^{T}\mL_{0,2}\ve_\cS &= \sum_{i=1}^{N_2}  \vc^T(\sigma^{(2)}_i)\mM(\sigma^{(2)}_i)\vc(\sigma^{(2)}_i) \nonumber \\
&=  (z_{2}(\cS) + z_{1}(\cS)) \nonumber \\
& =  \mathrm{cut}_{0,2}(\cS,\bar{\cS}).
\end{align}
 Similarly, we can express $\mathrm{vol}_{0,2}(\cS)$ in the quadratic form of the degree matrix as
\begin{align}
   \frac{1}{2} \ve_\cS^{T}\mD_{0,2}\ve_\cS &= \sum_{i=1}^{N_2}  \vc^T(\sigma^{(2)}_i)\mathrm{diag}\left(\mM(\sigma^{(2)}_i)\right)\vc(\sigma^{(2)}_i) \nonumber \\
&=  3z_{3}(\cS) + 2z_2(\cS) + z_{1}(\cS) \nonumber \\
& =  \mathrm{vol}_{0,2}(\cS).
\end{align}
\begin{algorithm}[t]
\caption{Clustering $0$-simplices based on $2$-simplices}\label{alg:sp_0}
\begin{algorithmic}[1]
\State {Input:} Simplicial complex $\cX$, boundary matrices 
$\mB_{1}$, $\mB_{2}$
\State {Output:}  Clusters $(\cS,\bar{\cS})$ based on $2$-simplices 
\State Compute $\mA_{0,2}$ from \eqref{eq:simplicialadjacency}, $\mD_{0,2}$, and $\tilde{\mL}_{0,2} = \mD_{0,2}^{-1/2} \mL_{0,2} \mD_{0,2}^{-1/2}$
\State $\tilde{\ve}_\cS \gets$  Eigenvector of $\tilde{\mL}_{0,2}$ corresponding to its second smallest eigenvalue
\State $\gamma_{k} \gets$ Node index corresponding to the $k$th smallest entry of  \text{$\mD^{-1/2}\tilde{\ve}_\cS$}
\State $\cS \gets$ $\underset{1\leq k\leq N_0}{\mathrm{min}}$ \, $\phi_{0,2}(\cS_{k})$, where $\cS_k = \{\gamma_1,\gamma_2,\ldots,\gamma_k\}$.
\end{algorithmic}
\end{algorithm}
\color{black}
Hence, \eqref{eq:simplicial optimization} can be equivalently expressed as
\begin{align}
    \underset{\cS}{\mathrm{minimize}} \,\, \frac{\ve_\cS^{T}\mL_{0,2}\ve_\cS}{\ve_\cS^{T}\mD_{0,2}\ve_\cS }\nonumber \\
    \mathrm{s.t.} \quad\boldsymbol{1}^{T}\mD_{0,2}\ve_\cS &= 0,
    \label{eq:Spectral_clustering_simplices}
\end{align}
where $\boldsymbol{1}$ is the all-one vector, the constraint ensures a non-trivial solution.  

By defining $\tilde{\ve}_\cS = \mD_{0,2}^{1/2}\ve_\cS$, \eqref{eq:Spectral_clustering_simplices} can be transformed to 
\begin{align}
    \underset{\cS}{\mathrm{minimize}} \,\, \tilde{\ve}_\cS^{T}\tilde{\mL}_{0,2}\tilde{\ve}_\cS\nonumber \\
    \mathrm{s.to} \quad\boldsymbol{1}^{T}\mD^{1/2}_{0,2}\tilde{\ve}_\cS &= 0, \nonumber \\ \quad \tilde{\ve}_\cS^{T}\tilde{\ve}_\cS=1. 
    \label{eq:Spectral_clustering_simplices}
\end{align}
Thus solving the above problem to minimize the simplicial conductance as in \eqref{eq:simplicial optimization} is analogous to the well-known 2-way spectral clustering problem based on edge cuts and deriving the related Cheeger inequality~\cite{bandeira2013cheeger},\cite{von2007tutorial} remains the same, but with the main difference being that the simplicial conductance is now bounded by the second smallest eigenvalue of normalized simplicial Laplacian. \end{proof}
\noindent

Based on this theorem, we propose the 
simplicial spectral clustering algorithm as detailed in Algorithm~\ref{alg:sp_0}, which extends classical spectral clustering to obtain an optimal 2-way partition of the simplicial complex using $2$-simplices.

\begin{figure*}[t]%
    \centering
    {\includegraphics[width=2\columnwidth]{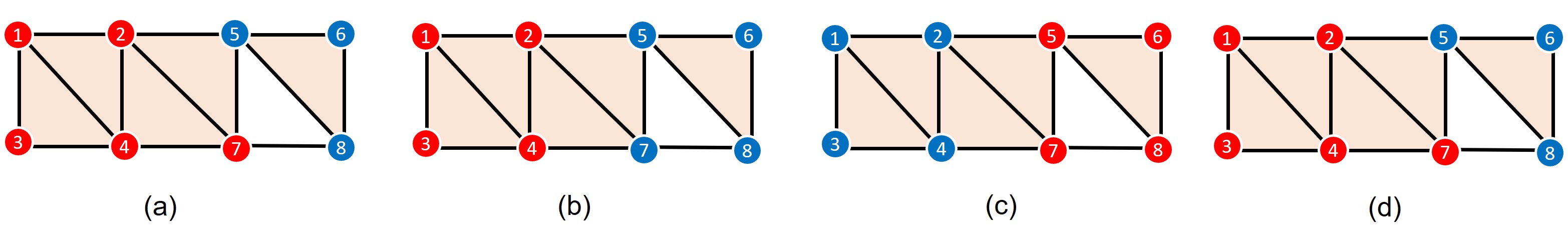} }%
    \caption{Synthetic dataset.
    (a) Ground truth. (b) Communities from graph spectral clustering. (c) Communities from motif spectral clustering. (d) Communities from simplicial spectral clustering.}%
    \label{fig:node partioning}
    \vskip4mm
\end{figure*}

\begin{figure*}%
    \centering
    {\includegraphics[width=2\columnwidth]{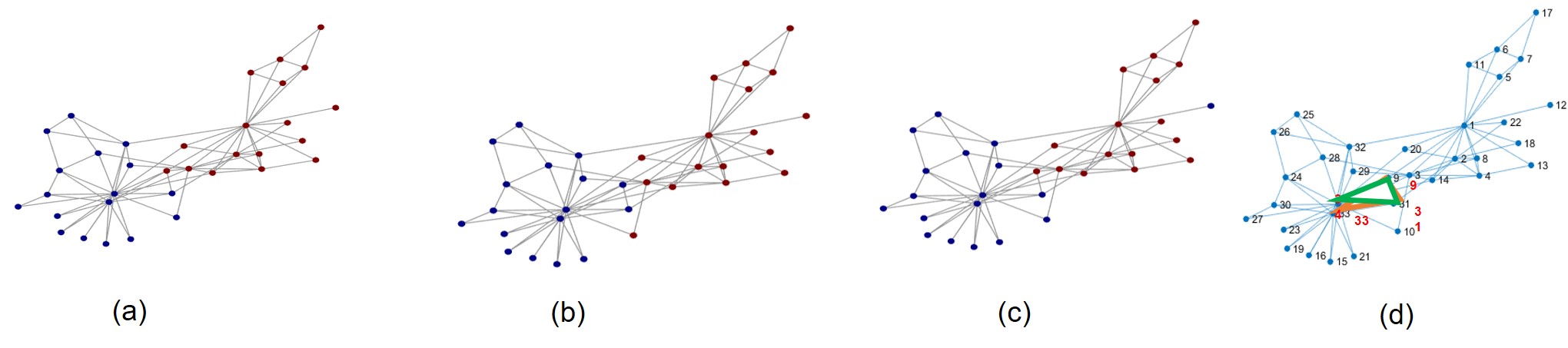} }%
    \caption{Zachary Karate network data. (a) Ground truth network. (b) Communities obtained from motif spectral clustering.   (c) Communities obtained from simplicial spectral clustering. (d) Removed triangles.}%
    \label{fig:zarachy}
\end{figure*}

\begin{figure}%
    \centering
    {\includegraphics[width=\columnwidth]{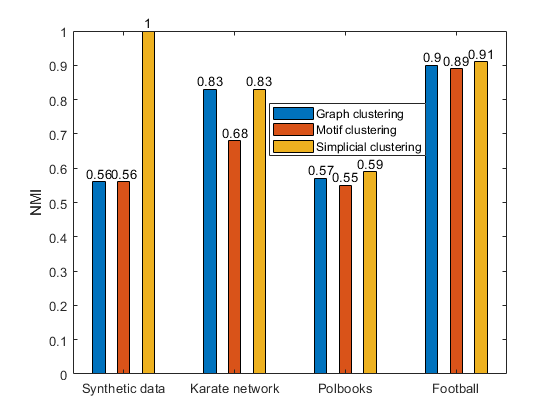} }%
    \caption{Normalized mutual information.}%
    \label{fig:barplot}
\end{figure}
\begin{table}[t]
  \begin{center}
    \caption{DATASETS.}
    \label{tab:table1}
    \begin{tabular}{c|c|c|c} 
    \hline
      \textbf{Dataset} & \textbf{\# of nodes} & \textbf{\# of edges} & \textbf{\# of clusters}\\ 
      \hline
      Zachary\cite{zachary1977information} & 34 & 78 & 2 \\
      Polbooks \cite{newman2006modularity}& 105 & 441 & 3\\
      Football \cite{newman2006modularity}& 115 & 613 & 12\\
      \hline
    \end{tabular}
  \end{center}
\end{table}

\section{Numerical experiments}
Numerical experiments to test the proposed method are conducted on synthetic and real datasets.  
We compare the proposed approach with triangle motif-based~\cite{benson2016higher} and graph-based~\cite{von2007tutorial} spectral clustering algorithms. As a performance metric, we use normalized mutual information (NMI)~\cite{NMI}.

\begin{figure*}%
    \centering
    \includegraphics[width=1.75\columnwidth]{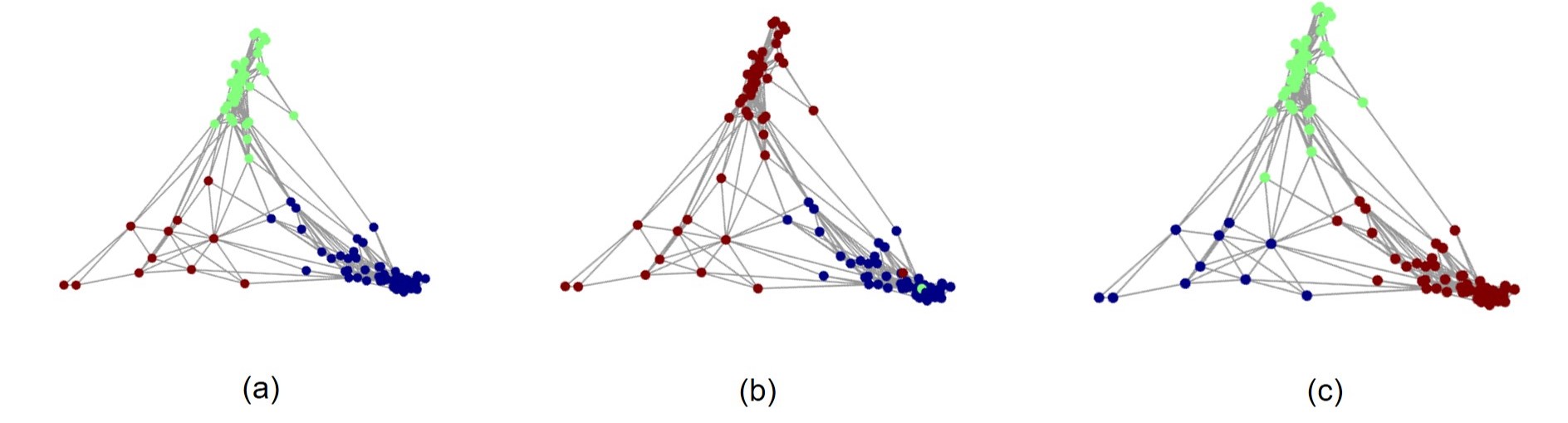} %
   \caption{US Polbooks data. (a) Ground truth network. (b) Communities obtained from motif spectral clustering.   (c) Communities obtained from simplicial spectral clustering.}%
    \label{fig:Polbooks}
\vskip4mm
\end{figure*}
\begin{figure*}%
    \centering
    \includegraphics[width=1.6\columnwidth]{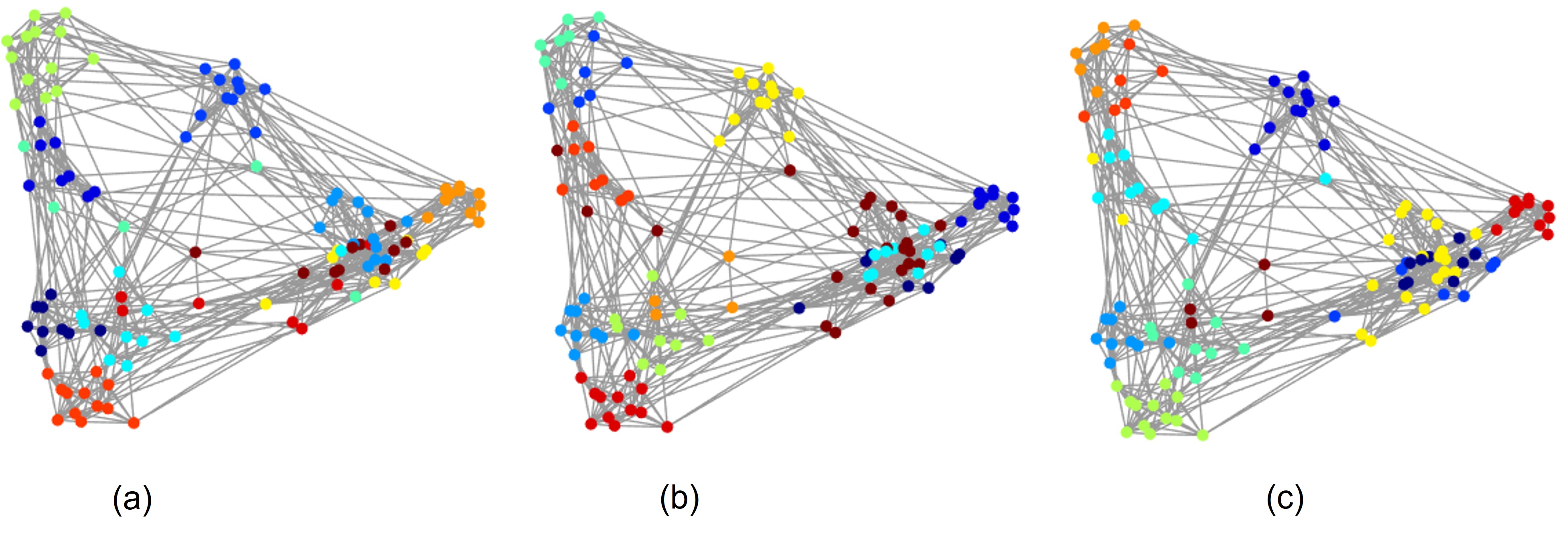} %
   \caption{Football network data. (a) Ground truth network. (b) Communities obtained from motif spectral clustering. (c) Communities obtained from simplicial spectral clustering.}%
    \label{fig:football}
\end{figure*}
\subsection{Synthetic dataset} 

We generate the simplicial complex shown in Fig.~\ref{fig:example}(a), which has $N_{0} = 8$ $0$-simplices, $N_{1} =13$ $1$-simplices, and $N_{2} = 5$, $2$-simplices. So the boundary matrices are $\mB_{1} \in \mathbb{R}^{8 \times 13}$ and $\mB_{2} \in \mathbb{R}^{13 \times 5}$. For this curated simplicial complex, the $0$-simplices are categorized into $2$-classes as shown in Fig.~\ref{fig:node partioning}(a). 
Ground truth class labels for $0$-simplices are based on their relationship with neighboring $0$-simplices through $2$-simplices. 
We compute the simplicial adjacency matrix $\mA_{0,2}\in \mathbb{R}^{8 \times 8}$ using \eqref{eq:simplicialadjacency}, and obtain the clusters using simplicial spectral clustering (Algorithm \ref{alg:sp_0}). 
They are shown in Fig. \ref{fig:node partioning}(d).  
Given that it relies on adjacency through $2$-simplices, it
recovers ground truth.
As a comparison, motif-based spectral clustering \cite{benson2016higher} (Fig.~\ref{fig:node partioning}(c)) and graph spectral clustering (Fig.~\ref{fig:node partioning}(b)) do not recover it. For motifs, it assumes that every triangle is filled, while spectral clustering ignores them.
NMI between the obtained clusters and ground truth is reported in Fig.~\ref{fig:barplot}. The proposed method is optimal with NMI = 1.
\subsection{Real datasets} 
We also apply the proposed method on the Zachary Karate Club network~\cite{zachary1977information}, the Polbooks network~\cite{newman2006modularity}, and a football network~\cite{newman2006modularity}; for details about the datasets see Table \ref{tab:table1}. 
Although information about filled triangles is not directly available in these datasets, we report observations about what happens when we assume that some triangles are not filled. 

\subsubsection{Zachary karate club network} This dataset~\cite{zachary1977information} is a well-known community detection dataset about the social relationships of members in a karate club, where the members belong to 2 groups within the club as shown by two different colors in Fig.~\ref{fig:zarachy}(a). 
We follow the procedure from~\cite{benson2016higher} to obtain the triangle motif adjacency matrix. The cluster assignments from motif spectral clustering are shown in Fig.~\ref{fig:zarachy}(b), and NMI is reported in Fig.~\ref{fig:barplot}.  

For the proposed simplicial spectral clustering, we  study the impact of filled triangles with respect to (w.r.t.) the hollow ones. Towards that end, we conduct the following analysis: out of all triangles listed by the triangle adjacency matrix, we remove a few triangles assuming they are open. We remove the triangles formed with edges $\{9,31\}$ and $\{9,34\}$ with members from the 2 ground truth groups. These triangles act as a bridge between the two groups (as shown in Fig.~\ref{fig:zarachy}(d)). The proposed simplicial spectral clustering significantly improves NMI (see Fig.~\ref{fig:barplot}). This asserts our claim that assuming all the open triangles as filled overlooks the importance of filled triangles for graph partitioning.

\subsubsection{US Polbooks data~\cite{newman2006modularity}} 
This network has $3$ clusters. Hence we use motif clustering with multiple clusters as a baseline~\cite{benson2016higher}.
For simplicial spectral clustering, the method is:  
$1$) Compute the eigenvectors of the normalized simplical Laplacian matrix; $2$) Collect the eigenvectors corresponding to the $3$ smallest eigenvalues of $\tilde{\mL}$; $3$) Run $K$-means algorithm on obtained eigenvectors with $K=3$ to obtain the clusters. 

The ground truth network results from motif clustering, and simplicial spectral clustering are in Figs. \ref{fig:Polbooks}(a)-(c). As before, we assume a few triangles as hollow. The obtained NMI is reported in Fig.~\ref{fig:barplot}, where the proposed simplicial clustering outperforms the baselines.

\subsubsection{American Football network data~\cite{newman2006modularity}} American football club network is a multicluster dataset with $12$ communities, where we consider a few triangles as hallow. The simplicial spectral clustering for obtaining multicluster assignments is as before.
Figs. \ref{fig:football}(a)-(c) show the ground truth network along with the cluster assignments from motif spectral clustering and simplicial spectral clustering. NMIs for these methods are reported in Fig.~\ref{fig:barplot}, where it can be seen that the proposed method achieves better performance at finding the communities.

\section{Conclusions}
We proposed a simplicial conduction 2-simplicial cut function to incorporate higher-order network interactions. We defined a simplicial Laplacian operator that captures the relationship between the nodes through $2$-simplices and developed a Cheeger inequality relating the second smallest eigenvalue of the proposed simplicial Laplacian matrix to the optimal simplicial conductance. Further, leveraging the Cheeger inequality, we developed a new simplicial spectral clustering algorithm, which was found to cluster networks better than edge cut-based and triangle motif-based  spectral clustering methods while being able to distinguish filled and hollow triangles.


\bibliographystyle{IEEEtran}
\bibliography{refs.bib}

\end{document}